\begin{document}
\title{Why Mixup Improves the Model Performance}
%
%
\author{Masanari Kimura\inst{1}\orcidID{0000-0002-9953-3469}}
\authorrunning{Masanari Kimura}
%
\institute{Ridge-i Inc., Tokyo, Japan \\ \email{mkimura@ridge-i.com}\\ 
}
\maketitle              
\begin{abstract}
Machine learning techniques are used in a wide range of domains.
However, machine learning models often suffer from the problem of over-fitting.
Many data augmentation methods have been proposed to tackle such a problem, and one of them is called mixup.
Mixup is a recently proposed regularization procedure, which linearly interpolates a random pair of training examples.
This regularization method works very well experimentally, but its theoretical guarantee is not adequately discussed.
In this study, we aim to discover why mixup works well from the aspect of the statistical learning theory.

\keywords{machine learning, data augmentation, generalization bounds}
\end{abstract}
\section{Introduction}
Machine learning has achieved remarkable results in recent years.
However, despite such excellent performance, machine learning models often suffer from the problem of over-fitting~\cite{lawrence2000overfitting}.
In recent years, a concept called mixup~\cite{zhang2018mixup} has attracted attention as one of the powerful regularization methods for machine learning models.
The main idea of these regularization methods is to prepare
\begin{equation}
    (\tilde{\bm{x}}_{ij}, \tilde{y}_{ij}) = (\lambda\bm{x}_i+(1-\lambda)\bm{x}_j, \lambda y_i + (1-\lambda) y_j)
\end{equation}
mixed with random pairs $(\bm{x}_i, \bm{x}_j)$ of input vectors and their corresponding labels $(y_i, y_j)$ and use them as training data.
This regularization method is very powerful and has been applied in various fields such as image recognition~\cite{tokozume2018between} or speech recognition~\cite{medennikov2018investigation}.
Despite these strong experimental results, there is not enough discussion about why this method works well.


In this paper, we give theoretical guarantees for regularization by mixup and reveal how regularization changes in each setting.
To summarize our results, mixup regularization leads to the following effects:
\begin{itemize}
    \item For linear classifiers, the effect of regularization is higher when the sample size is small, and the sample standard deviation is large.
    \item For neural networks, the effect of regularization is higher when the number of samples is small, and the training dataset contains outliers.
    \item When the parameter $\lambda$ is close to $0$ or $1$, mixup can reduce the variance of the estimator, but this will be affected by bias.
    \item When the parameter $\lambda$ has near the optimal value, mixup can reduce both the bias and variance of the estimator.
    \item Geometrically, mixup reduces the second-order derivative of the convex function that characterizes the Bregman divergence.
\end{itemize}


\section{Related Works}
\subsection{Mixup Variants}
Mixup is originaly proposed by ~\cite{xu2018mixup}.
The main idea of these regularization methods is to prepare
\begin{equation}
    (\tilde{\bm{x}}_{ij}, \tilde{y}_{ij}) = (\lambda\bm{x}_i+(1-\lambda)\bm{x}_j, \lambda y_i + (1-\lambda) y_j) \nonumber
\end{equation}
mixed with random pairs $(\bm{x}_i, \bm{x}_j)$ of input vectors and their corresponding labels $(y_i, y_j)$ and use them as training data, where $\lambda\sim Beta(\alpha, \alpha)$, for $\alpha\in (0, \infty)$.

Because of its power and ease of implementation, several variants have been studied~\cite{pmlr-v97-verma19a,kimICML20}.
However, most of them are heuristic methods and have insufficient theoretical explanations.



\section{Notations and  Preliminaries}
We consider a binary classification problem in this paper.
However, our analysis can easily be applied to a multi-class case.

Let $\mathcal{X}$ be the input space, $\mathcal{Y} = \{-1, +1\}$ be the output space, and $\mathcal{C}$ be a set of concepts we may wish to learn, called concept class.
We assume that each input vector $\bm{x}\in\mathbb{R}^d$ is of dimension $d$.
We also assume that examples are independently and identically distributed~(i.i.d) according to some fixed but unknown distribution $D$.

We consider a fixed set of possible concepts $H$, called hypothesis set.
We receive a sample $B=(\bm{x}_1,\dots,\bm{x}_n)$ drawn i.i.d. according to $D$ as well as the labels $(c(\bm{x}_1),\dots,c(\bm{x}_n))$, which are based on a specific target concept $c\in{\mathcal{C}}:\mathcal{X}\mapsto\mathcal{Y}$.
Our task is to use the labeled sample $B$ to find a hypothesis $h_B\in{H}$ that has a small generalization error with respect to the concept $c$.
The generalization error $\mathcal{R}(h)$ is defined as follows.
\begin{definition}{(Generalization error)}
\label{def:generalization_error}
Given a hypothesis $h\in{H}$, a target concept $c\in\mathcal{C}$, and unknown distribution $D$, the generalization error of $h$ is defined by
\begin{equation}
    \mathcal{R}(h) = \mathbb{E}_{x\sim{D}}\Big[\mathbbm{1}_{h(\bm{x})\neq{c(\bm{x}})}\Big],
\end{equation}
where $\mathbbm{1}_\omega$ is the indicator function of the event $\omega$.
\end{definition}
The generalization error of a hypothesis $h$ is not directly accessible since both the underlying distribution $D$ and the target concept $c$ are unknown
Then, we have to measure the empirical error of hypothesis $h$ on the observable labeled sample $B$. 
\begin{definition}{(Empirical error)}
\label{def:empirical_error}
Given a hypothesis $h\in{H}$, a target concept $c\in\mathcal{C}$, and a sample $B = (\bm{x}_1,\dots,\bm{x}_n)$, the empirical error of $h$ is defined by
\begin{equation}
    \hat{\mathcal{R}}(h) = \frac{1}{n}\sum^n_{i=1}\mathbbm{1}_{h(\bm{x}_i)\neq{c(\bm{x}_i)}}.
\end{equation}
\end{definition}

In learning problems, we are interested in how much difference there is between empirical and generalization errors.
Therefore, in general, we consider the relative generalization error $\hat{\mathcal{R}}(h)-\mathcal{R}(h)$.
\begin{definition}{(Empirical Rademacher complexity)}
\label{def:empirical_rademacher_complexity}
Given a hypothesis set $H$ and a sample $B=(\bm{x}_1,\dots,\bm{x}_n)$, the empirical Rademacher complexity of $H$ is defined as:
\begin{equation}
    \hat{\mathfrak{R}}_B(H) = \mathbb{E}_{\bm{\sigma}}\Big[\sup_{h\in{H}}\frac{1}{n}\sum^n_{i=1}\sigma_i h(\bm{x}_i)\Big],
\end{equation}
where $\bm{\sigma} = (\sigma_1,\dots,\sigma_n)^T$ with Rademacher variables $\sigma_i\in\{-1,+1\}$ which are independent uniform random variables.
\end{definition}
\begin{definition}{(Rademacher complexity)}
\label{def:rademacher_complexity}
Let $D$ denote the distribution according to which samples are drawn. For any sample size $n\geq 1$, the Rademacher complexity of $H$ is the expectation of the empirical Rademacher complexity over all samples of size $n$ drawn according to $D$:
\begin{equation}
    \mathfrak{R}_n(H) = \mathbb{E}_{B\sim{D^n}}\Big[\hat{\mathfrak{R}}_B(H)\Big].
\end{equation}
\end{definition}
Intuitively, this discribes the richeness of hypothesis class $H$.

\begin{figure}[t]
    \centering
    \includegraphics[scale=0.3, bb=10 1 950 371]{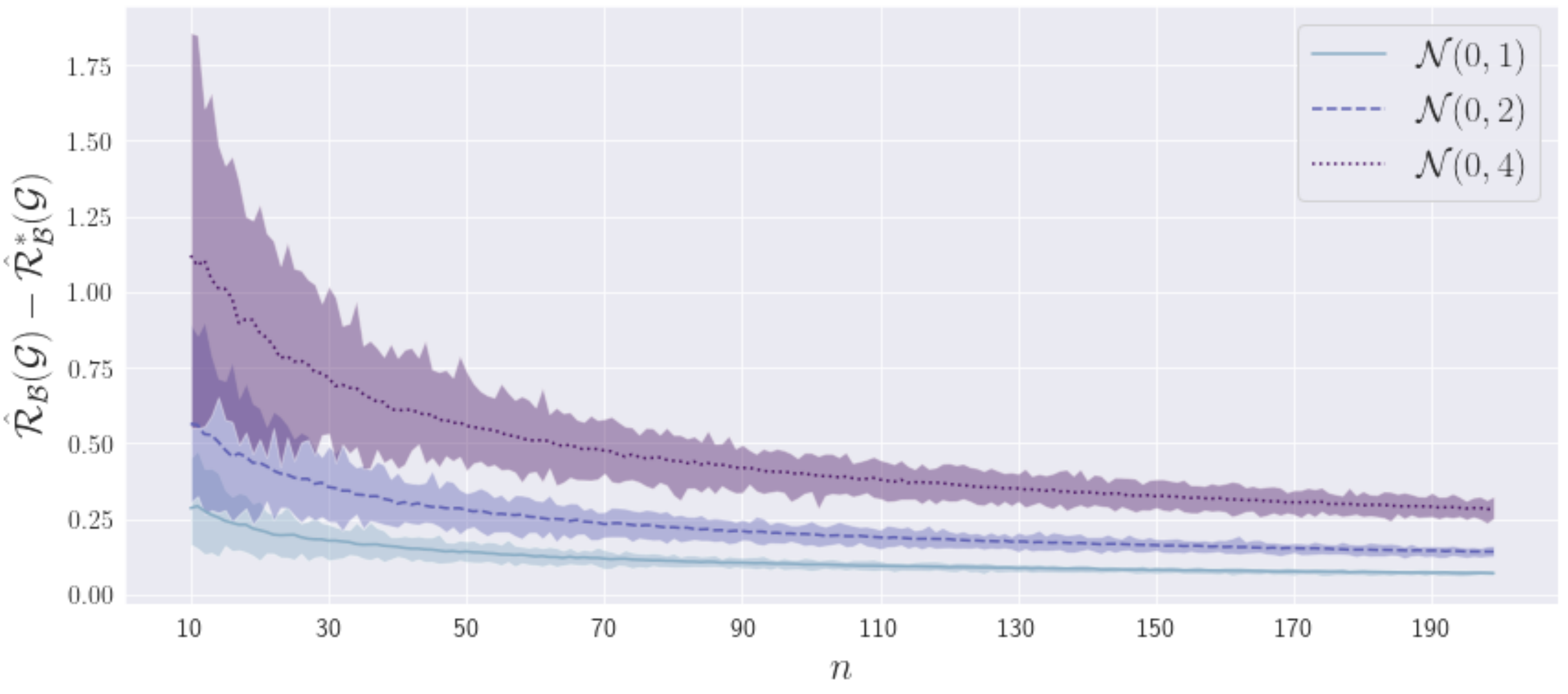}
    \caption{\label{fig:rademacher_diff} The relationship between $\hat{\mathfrak{R}}_B(H_\ell)-\hat{\mathfrak{R}}^*_B(H_\ell)$ and the number of samples $n$ and variance $\sigma^2$ when mixup is applied. Each data point was sampled from the normal distribution $\mathcal{N}(0,\sigma^2)$ and the constant part was set to $1$. 
    }
\end{figure}

The Rademacher complexity is a very useful tool for investigating hypothesis class $H$. 
\begin{lemma}
\label{lm:1}
Let $\mathcal{G}: \mathcal{Z}=\mathcal{X}\times\mathcal{Y} \mapsto [0,1]$ be a family of functions. Then, for any $\delta>0$, with probability at least $1-\delta$, the following holds for all $g\in\mathcal{G}$:
\begin{align}
    \mathbb{E}\Biggl[g(\bm{z})&\leq\frac{1}{n}\sum^n_{i=1}g(\bm{z}_i) + 2\mathfrak{R}_n(G) + \sqrt{\frac{\log{\frac{1}{\delta}}}{2m}}\Biggr] \\
    \mathbb{E}\Biggl[g(\bm{z})&\leq\frac{1}{n}\sum^n_{i=1}g(\bm{z}_i) + 2\mathfrak{R}_B(G) + 3\sqrt{\frac{\log{\frac{2}{\delta}}}{2m}}\Biggr].
\end{align}
\end{lemma}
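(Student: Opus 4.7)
The plan is to follow the now-standard route: apply McDiarmid's bounded differences inequality to the worst-case gap between population and empirical averages, then collapse the resulting expectation with a symmetrization argument that introduces the Rademacher variables of Definition~\ref{def:rademacher_complexity}. Concretely, I would define
\begin{equation}
\Phi(B) \;=\; \sup_{g\in\mathcal{G}}\Bigl(\mathbb{E}[g(\bm z)] - \tfrac{1}{n}\sum_{i=1}^{n} g(\bm z_i)\Bigr),
\end{equation}
and observe that because $g$ takes values in $[0,1]$, replacing any single $\bm z_i$ in $B$ changes $\Phi(B)$ by at most $1/n$. McDiarmid's inequality then yields that $\Phi(B)\le \mathbb{E}[\Phi(B)] + \sqrt{\log(1/\delta)/(2n)}$ with probability at least $1-\delta$. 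This immediately produces a bound of the first shape once I control $\mathbb{E}[\Phi(B)]$.

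For that expectation, I would use the classical ghost-sample trick: introduce an independent copy $B' = (\bm z'_1,\dots,\bm z'_n)$ drawn from $D^n$, rewrite $\mathbb{E}[g(\bm z)]$ as $\mathbb{E}_{B'}[\tfrac{1}{n}\sum_i g(\bm z'_i)]$, pull the supremum inside the outer expectation using Jensen, and then introduce Rademacher signs $\sigma_i$ since the pair $(g(\bm z'_i)-g(\bm z_i))$ is symmetric in distribution. Splitting the supremum in two and using linearity of expectation gives $\mathbb{E}[\Phi(B)] \le 2\,\mathfrak{R}_n(\mathcal{G})$, and substituting into the McDiarmid bound produces the first inequality of the lemma.

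For the second inequality, I need to replace the (inaccessible) $\mathfrak{R}_n(\mathcal{G})$ by its empirical counterpart $\hat{\mathfrak{R}}_B(\mathcal{G})$. The natural move is to apply McDiarmid a second time, now to $\hat{\mathfrak{R}}_B(\mathcal{G})$ as a function of $B$: again a single-coordinate change perturbs it by at most $1/n$, so with probability $\ge 1-\delta/2$ we have $\mathfrak{R}_n(\mathcal{G}) \le \hat{\mathfrak{R}}_B(\mathcal{G}) + \sqrt{\log(2/\delta)/(2n)}$. A union bound combining this with the first inequality (applied at confidence $1-\delta/2$) gives the stated $2\hat{\mathfrak{R}}_B(\mathcal{G}) + 3\sqrt{\log(2/\delta)/(2n)}$ bound.

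The only step that requires real care is the symmetrization: everything else is a mechanical application of McDiarmid and a union bound. In particular, I would want to double-check that the hypotheses of McDiarmid are met uniformly in $g$ (this is exactly where the $[0,1]$-boundedness of $\mathcal{G}$ in the statement is used) and that swapping supremum and expectation via Jensen goes in the correct direction. Note that the lemma as stated has $m$ in the denominator of the square-root terms while the sums run to $n$; I will read $m=n$ throughout, which is the only consistent interpretation.
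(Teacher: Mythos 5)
Your proposal is correct and follows essentially the same route as the paper: the function $\Phi(B)=\sup_{g}(\mathbb{E}[g]-\hat{\mathbb{E}}_B[g])$, the $1/n$ bounded-differences argument feeding McDiarmid's inequality, ghost-sample symmetrization to bound $\mathbb{E}[\Phi(B)]$ by $2\mathfrak{R}_n(\mathcal{G})$, a second application of McDiarmid to pass to the empirical Rademacher complexity, and a final union bound. Your reading of $m=n$ is also the right way to resolve the notational mismatch in the statement.
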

\begin{proof}
For any sample $B=(\bm{z}_1,\dots,\bm{z}_n)$ and for any $g\in\mathcal{G}$, we denote by $\hat{\mathbb{E}}_B[g]$ the empirical average of $g$ over $B:\hat{\mathbb{E}}_B[g] = \frac{1}{n}\sum^n_{i=1}g(\bm{z}_i)$.
We define the function $\Phi(\cdot)$ for any sample $B$ as follows:
\begin{equation}
    \Phi(B) = \sup_{g\in\mathcal{G}}\mathbb{E}[g] - \hat{\mathbb{E}}_B[g].
\end{equation}
Let $B$ and $B'$ be two samples differing by exactly one point, which mean $\bm{z}_n\in{B}\land\bm{z}_n\notin{B}'$ and $\bm{z}'_n\in{B}'\land\bm{z}'_n\notin{B}$. Then, we have
\begin{align}
    \Phi(B') - \Phi(B) &\leq \sup_{g\in\mathcal{G}}\hat{\mathbb{E}}_B[g] - \hat{\mathbb{E}}_{B'}[g] = \sup_{g\in\mathcal{G}}\frac{g(\bm{z}_n)-g(\bm{z}'_n)}{n}\leq\frac{1}{n} \\
    \Phi(B) - \Phi(B') &\leq \sup_{g\in\mathcal{G}}\hat{\mathbb{E}}_{B'}[g] - \hat{\mathbb{E}}_{B}[g] = \sup_{g\in\mathcal{G}}\frac{g(\bm{z}'_n)-g(\bm{z}_n)}{n}\leq\frac{1}{n}.
\end{align}
Then, by McDiarmid's inequality, for any $\delta>0$, with probability at least $1-\frac{\delta}{2}$, the following holds:
\begin{align}
\Phi(B) &\leq \mathbb{E}_B[\Phi(B)] + \sqrt{\frac{\log{\frac{2}{\delta}}}{2n}} \\
\mathbb{E}_B[\Phi(B)] &\leq \mathbb{E}_{\bm{\sigma},B,B'}\Biggl[\sup_{g\in\mathcal{G}}\frac{1}{n}\sum^n_{i=1}\sigma_i(g(\bm{z}'_i)-g(\bm{z}_i))\Biggr] = 2\mathbb{E}_{\bm{\sigma},B}\Biggl[\sup_{g\in\mathcal{G}}\frac{1}{n}\sum^n_{i=1}\sigma_ig(z_i)\Biggr] 
\nonumber
\end{align}
Then, using MacDiarmid's inequality, with probability $1-\frac{\delta}{2}$,
$
    \mathfrak{R}_n(\mathcal{G})\leq\hat{\mathcal{R}}_B(\mathcal{G}) + \sqrt{\frac{\log{\frac{2}{\delta}}}{2n}}
$.
Finally, we use the union bound and we can have the result of this lemma.
\end{proof}
\begin{lemma}
\label{lm:2}
Let $H$ be a family of functions taking values in $\{-1, +1\}$ and let $\mathcal{G}$ be the family of loss functions associated to $H$: $\mathcal{G} = \{(x, y)\mapsto \mathbbm{1}_{h(x)\neq y}:h\in{H}\}$.
For any samples $B=((\bm{x}_1,y_1),\dots,(\bm{x}_n,y_n))$, let $\mathcal{S_X}$ denote the its projection over $\mathcal{X}:\mathcal{S_X}=(\bm{x}_1,\dots,\bm{x}_n)$.
Then, the following relation holds between the empirical Rademacher complexities of $\mathcal{G}$ and $H$:
\begin{equation}
    \hat{\mathfrak{R}}_B(\mathcal{G}) = \frac{1}{2}\hat{\mathfrak{R}}_\mathcal{S_X}(H).
\end{equation}
\end{lemma}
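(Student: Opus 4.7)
The plan is to exploit the fact that both $h(\bm{x}_i)$ and $y_i$ live in $\{-1,+1\}$, which lets me replace the $0/1$ loss by an affine function of $h(\bm{x}_i)$ and then strip away everything that is irrelevant for a Rademacher average.

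First I would record the pointwise identity
\begin{equation}
\mathbbm{1}_{h(\bm{x}_i)\neq y_i} \;=\; \frac{1 - y_i h(\bm{x}_i)}{2},
\end{equation}
which is immediate from case analysis on the two possible values of $y_i h(\bm{x}_i)\in\{-1,+1\}$. Plugging this into Definition~\ref{def:empirical_rademacher_complexity} applied to $\mathcal{G}$, I would obtain
\begin{equation}
\hat{\mathfrak{R}}_B(\mathcal{G}) \;=\; \mathbb{E}_{\bm{\sigma}}\Biggl[\sup_{h\in H}\frac{1}{n}\sum_{i=1}^n \sigma_i\,\frac{1-y_i h(\bm{x}_i)}{2}\Biggr].
\end{equation}

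Next I would split the sum into the constant piece $\frac{1}{2n}\sum_i \sigma_i$, whose expectation over the Rademacher variables is zero and which can be pulled outside the supremum since it does not depend on $h$, and the remaining piece involving $-\sigma_i y_i h(\bm{x}_i)$. Pulling out the factor $\tfrac{1}{2}$ leaves
\begin{equation}
\hat{\mathfrak{R}}_B(\mathcal{G}) \;=\; \frac{1}{2}\,\mathbb{E}_{\bm{\sigma}}\Biggl[\sup_{h\in H}\frac{1}{n}\sum_{i=1}^n (-\sigma_i y_i)\,h(\bm{x}_i)\Biggr].
\end{equation}

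The final step uses a symmetry argument: since each $y_i\in\{-1,+1\}$ is a fixed sign, the product $-\sigma_i y_i$ has the same Rademacher $\{\pm 1\}$ distribution as $\sigma_i$ itself, and the vector $(-\sigma_i y_i)_{i=1}^n$ is equal in distribution to $(\sigma_i)_{i=1}^n$. Relabelling the dummy Rademacher variables therefore collapses the expression to $\frac{1}{2}\hat{\mathfrak{R}}_{\mathcal{S_X}}(H)$, as required. There is no real obstacle here; the only subtlety worth flagging carefully in the write-up is justifying that the sign flip by $y_i$ leaves the joint distribution of the Rademacher vector invariant, which is essentially the defining symmetry of the uniform distribution on $\{-1,+1\}^n$.
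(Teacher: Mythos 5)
Your proof is correct and follows essentially the same route as the paper: the identity $\mathbbm{1}_{h(\bm{x}_i)\neq y_i}=\tfrac{1}{2}(1-y_ih(\bm{x}_i))$, dropping the constant term by $\mathbb{E}[\sigma_i]=0$, and absorbing the fixed signs $y_i$ into the Rademacher variables. The paper compresses all of this into a single displayed equality, so your write-up simply supplies the intermediate steps it omits.
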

\begin{proof}
For any sample $B=((\bm{x}_1,y_1),\dots,(\bm{x}_2,y_2))$ of elements in $\mathcal{X}\times\mathcal{Y}$, the empirical Rademacher complexity of $\mathcal{G}$ can be written as:
\begin{align}
\hat{\mathfrak{R}}_B(\mathcal{G}) &= \mathbb{E}_{\bm{\sigma}}\Biggl[\sup_{h\in{H}}\frac{1}{n}\sum^n_{i=1}\sigma_i\mathbbm{1}_{h(\bm{x}_i)\neq y_i}\Biggr] = \frac{1}{2}\mathbb{E}_{\bm{\sigma}}\Biggl[\sup_{h\in{H}}\frac{1}{n}\sum^n_{i=1}\sigma_i h(\bm{x}_i)\Biggr]. 
\end{align}
\end{proof}
\begin{theorem}
\label{thm:rademacher_generalization_bound}
\label{THM:RADEMACHER_GENERALIZATION_BOUND}
Given a hypothesis $h\in{H}$ and the distribution $D$ over the input space $\mathcal{X}$, we assume that $\hat{\mathfrak{R}}_B(H)$ is the empirical Rademacher complexity of the hypothesis class $H$. Then, for any $\delta>0$, with probability at least $1-\delta$ over a sample $B$ of size $n$ drawn according to $D$, each of the following holds over $H$ uniformly:
\begin{eqnarray}
    \mathcal{R}(h) - \hat{\mathcal{R}}(h) &\leq& \hat{\mathfrak{R}}_n(H) + \sqrt{\frac{\log\frac{1}{\delta}}{2m}}, \\
    \mathcal{R}(h) - \hat{\mathcal{R}}(h) &\leq& \hat{\mathfrak{R}}_B(H) + 3\sqrt{\frac{\log\frac{2}{\delta}}{2m}}.
\end{eqnarray}
\end{theorem}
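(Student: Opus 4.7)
The plan is to derive Theorem \ref{thm:rademacher_generalization_bound} as a direct corollary of Lemmas \ref{lm:1} and \ref{lm:2}, applied to the family of $0/1$ losses induced by $H$. Concretely, I would work with the loss class $\mathcal{G} = \{(\bm{x}, y) \mapsto \mathbbm{1}_{h(\bm{x}) \neq y} : h \in H\}$, whose elements take values in $\{0,1\} \subset [0,1]$ and therefore satisfy the hypothesis of Lemma \ref{lm:1}.

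First I would identify $\mathbb{E}[g(\bm{z})]$ with $\mathcal{R}(h)$ and $\frac{1}{n}\sum_{i=1}^n g(\bm{z}_i)$ with $\hat{\mathcal{R}}(h)$ for the $g\in\mathcal{G}$ associated to $h\in H$. Plugging these identifications into the two conclusions of Lemma \ref{lm:1} yields, with probability at least $1-\delta$ uniformly in $h\in H$,
\begin{align*}
\mathcal{R}(h) - \hat{\mathcal{R}}(h) &\leq 2\mathfrak{R}_n(\mathcal{G}) + \sqrt{\tfrac{\log(1/\delta)}{2m}},\\
\mathcal{R}(h) - \hat{\mathcal{R}}(h) &\leq 2\hat{\mathfrak{R}}_B(\mathcal{G}) + 3\sqrt{\tfrac{\log(2/\delta)}{2m}}.
\end{align*}

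Next I would invoke Lemma \ref{lm:2} to replace $\hat{\mathfrak{R}}_B(\mathcal{G}) = \tfrac{1}{2}\hat{\mathfrak{R}}_{\mathcal{S}_{\mathcal{X}}}(H)$, and take expectations over $B\sim D^n$ on both sides to obtain $\mathfrak{R}_n(\mathcal{G}) = \tfrac{1}{2}\mathfrak{R}_n(H)$ by linearity of expectation (the Rademacher average commutes with the outer sample expectation). Substituting these identities causes the factors of $2$ to cancel, reproducing exactly the two inequalities claimed in the theorem.

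I do not anticipate any genuine obstacle: the argument is essentially bookkeeping, combining the McDiarmid/symmetrization bound from Lemma \ref{lm:1} with the reduction from $\mathcal{G}$ to $H$ supplied by Lemma \ref{lm:2}. The only points that will require mild care are the transfer of Lemma \ref{lm:2} from empirical to expected Rademacher complexities, and reconciling the slightly irregular notation in the statement (the first inequality writes $\hat{\mathfrak{R}}_n(H)$, which should be read as the expected complexity $\mathfrak{R}_n(H)$ that naturally emerges from the derivation above).
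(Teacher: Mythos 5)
Your proposal is correct and is exactly the argument the paper intends: the paper's own proof is a one-line remark that the theorem follows immediately from Lemma~\ref{lm:1} and Lemma~\ref{lm:2}, and your write-up simply makes explicit the substitution of the loss class $\mathcal{G}$, the factor-of-two cancellation via Lemma~\ref{lm:2}, and the passage to expected complexities. No gaps; your reading of the notational quirk $\hat{\mathfrak{R}}_n(H)$ as the expected Rademacher complexity is the right one.
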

\begin{proof}
From Lemma~\ref{lm:1} and Lemma~\ref{lm:2}, we can have the result of Theorem~\ref{thm:rademacher_generalization_bound} immediately.
\end{proof}



From the above discussion, we can see that if we can quantify the change of empirical Rademacher complexity before and after mixup, we can evaluate the relative generalization error of the hypothesis class $H$.
Our main idea is to clarify the effects of the mixup regularization by examining how these Rademacher complexity changes before and after regularization.
Note that we are not interested in the tightness of the bound, but only in the difference in the bound.


\section{Complexity Reduction of Linear Classifiers with Mixup}
\label{sec:complexity_linear}
In this section, we assume that $H_\ell$ is a class of linear functions:
\begin{equation}
    h(\bm{x})\in{H}_\ell = \Big\{\bm{x}\mapsto\bm{w}^T\bm{x}\ \big|\ \bm{w}\in\mathbb{R}^d,\ \|\bm{w}\|_2\leq\Lambda\Big\},
\end{equation}
where $\bm{w}$ is the weight vector and $\Lambda$ is a constant that regularizes the L2 norm of the weight vector.

\begin{theorem}
\label{thm:rademacher_reduction_linear}
\label{THM:RADEMACHER_REDUCTION_LINEAR}
Given a hypothesis set $H_\ell$ and a sample $B = (\bm{x}_1,\dots,\bm{x}_n)$, we assume that $\hat{\mathfrak{R}}_B(H_\ell)$ is the empirical Rademacher complexity of the hypothesis class $H_\ell$ and $\hat{\mathfrak{R}}^*_B(H_\ell)$ is the empirical Rademacher complexity of $H_\ell$ when mixup is applied.
The difference between the two Rademacher complexity $\hat{\mathfrak{R}}_B(H_\ell)-\hat{\mathfrak{R}}^*_B(H_\ell)$ is less than or equal to a constant multiple of the sample variance of the norm of the input vectors:
\begin{equation}
    \label{eq:rademacher_reduction_linear}
    \hat{\mathfrak{R}}_B(H_\ell)-\hat{\mathfrak{R}}^*_B(H_\ell) \leq \frac{C^\Lambda_\lambda}{\sqrt{n}}\sqrt{s^2\|\bm{x}\|_2},
\end{equation}
where $C^\Lambda_\lambda$ is a constant that depends on the parameter $\lambda$ of mixup and $s^2$ is the sample variance computed from the sample set.
\end{theorem}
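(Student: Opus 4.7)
The plan is to leverage the classical bound on the empirical Rademacher complexity of an $L^2$-bounded linear class: by Cauchy--Schwarz, Jensen's inequality, and the fact that $\mathbb{E}_{\bm{\sigma}}\bigl\|\sum_i \sigma_i \bm{x}_i\bigr\|_2^2 = \sum_i \|\bm{x}_i\|_2^2$, one obtains $\hat{\mathfrak{R}}_B(H_\ell) \le \frac{\Lambda}{n}\sqrt{\sum_{i=1}^n \|\bm{x}_i\|_2^2}$. I would apply exactly the same chain of inequalities to the mixup-augmented sample $\tilde{B} = \{\tilde{\bm{x}}_{ij}\}$ to obtain $\hat{\mathfrak{R}}^*_B(H_\ell) \le \frac{\Lambda}{n}\sqrt{\sum_{i,j} \|\tilde{\bm{x}}_{ij}\|_2^2}$, so that both complexities are expressed in a form that can be compared directly.

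Next I would expand the mixed norm as $\|\tilde{\bm{x}}_{ij}\|_2^2 = \lambda^2\|\bm{x}_i\|_2^2 + 2\lambda(1-\lambda)\bm{x}_i^T\bm{x}_j + (1-\lambda)^2\|\bm{x}_j\|_2^2$ and then average over the random pair index $j$ (and, if the proof goes through it, over $\lambda \sim \mathrm{Beta}(\alpha,\alpha)$). Averaging over $j$ yields terms proportional to the sample mean $\bar{\bm{x}} = \frac{1}{n}\sum_j \bm{x}_j$ and to the second moment $\frac{1}{n}\sum_j\|\bm{x}_j\|_2^2$; the combination $\frac{1}{n}\sum_j \|\bm{x}_j\|_2^2 - \|\bar{\bm{x}}\|_2^2$ is precisely the sample variance $s^2$ of the input norms that appears on the right-hand side of the theorem.

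To turn the comparison of two square roots into the stated upper bound on their difference, I would invoke the elementary inequality $\sqrt{a} - \sqrt{b} \le \frac{|a - b|}{2\sqrt{b}}$ (or, if cruder is enough, $\sqrt{a}-\sqrt{b}\le\sqrt{|a-b|}$), which isolates the contribution of the cross terms from step two. All $\lambda$-dependent and $\Lambda$-dependent factors collect into the single constant $C^\Lambda_\lambda$ advertised in the statement, while the $1/\sqrt{n}$ factor drops out naturally from the $\Lambda/n$ prefactor and the $\sqrt{n\cdot s^2}$ scaling of the sum of squared deviations.

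The main obstacle, in my view, is matching the indexing of the mixup sample with the precise definition of $\hat{\mathfrak{R}}^*_B$ used implicitly in the paper: depending on whether pairs are sampled with replacement, whether $j$ is fixed or averaged, and whether $\lambda$ is treated as deterministic or integrated against the Beta density, the cross-term $\bm{x}_i^T\bm{x}_j$ must be folded into the scalar quantity $s^2$. A secondary difficulty is that the expression $\sqrt{s^2\|\bm{x}\|_2}$ on the right-hand side is dimensionally odd, suggesting it is shorthand for $s^2$ of the quantity $\|\bm{x}\|_2$; I would read it that way and verify that the cross terms can indeed be controlled by $\mathrm{Var}(\|\bm{x}_i\|_2)$ via a Cauchy--Schwarz step $\bm{x}_i^T\bm{x}_j \le \|\bm{x}_i\|_2\|\bm{x}_j\|_2$ before collapsing into the variance.
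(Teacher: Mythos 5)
Your proposal follows essentially the same route as the paper: the classical bound $\hat{\mathfrak{R}}_B(H_\ell)\le\frac{\Lambda}{n}\bigl(\sum_i\|\bm{x}_i\|_2^2\bigr)^{1/2}$ obtained via Cauchy--Schwarz and Jensen, applied to both the original and the mixed sample, followed by subtraction of the two bounds and identification of $\frac{1}{n}\sum_i\|\bm{x}_i\|_2^2-\|\bar{\bm{x}}\|_2^2$ with the sample variance $s^2(\|\bm{x}\|_2)$. The only divergence is bookkeeping: the paper defines the mixed point as $\tilde{\bm{x}}_i=\mathbb{E}_{\bm{x}_j}[\lambda\bm{x}_i+(1-\lambda)\bm{x}_j]=\lambda\bm{x}_i+(1-\lambda)\bar{\bm{x}}$ (expectation taken inside the norm), so the cross terms $\bm{x}_i^T\bm{x}_j$ you plan to control via Cauchy--Schwarz never arise, and the difference of square roots is collapsed with $\sqrt{a}-\sqrt{b}\le\sqrt{a-b}$ rather than your $\frac{|a-b|}{2\sqrt{b}}$, giving the constant $C^\Lambda_\lambda=\Lambda|1-\lambda|$.
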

\begin{proof}
By the Definition~\ref{def:empirical_rademacher_complexity}, empirical Rademacher complexity of $h(\bm{x}) = \bm{w}^T\bm{x}$ is as follows:
\begin{align}
\hat{\mathfrak{R}}_B(H) &= \mathbb{E}_\sigma\Biggl[\frac{1}{n}\sup_{\|\bm{w}\|_2\leq\Lambda}\sum^n_{i=1}\sigma_i\bm{w}^T\bm{x}_i\Biggr] = \mathbb{E}_\sigma\Biggl[\frac{1}{n}\sup_{\|\bm{w}\|_2\leq\Lambda}\bm{w}^T\sum^n_{i=1}\sigma_i\bm{x}_i\Biggr] \nonumber\\
&= \frac{1}{n}\mathbb{E}_\sigma\Biggl[\sup_{\|\bm{w}\|_2\leq\Lambda}\bm{w}^T\sum^n_{i=1}\sigma_i\bm{x}_i\Biggr] = \frac{1}{n}\mathbb{E}_\sigma\Biggl[\Lambda\Biggl\|\sum^n_{i=1}\sigma_i\bm{x}_i\Biggr\|_2\Biggr] \nonumber\\
&\leq \frac{\Lambda}{n}\Biggl(\mathbb{E}_\sigma\Biggl[\Biggl\|\sum^n_{i=1}\sigma_i\bm{x}_i\Biggr\|_2^2\Biggr]\Biggr)^{\frac{1}{2}} = \frac{\Lambda}{n}\Biggl(\sum^n_{i=1}\|\bm{x}_i\|_2^2\Biggr)^{\frac{1}{2}}. \label{eq:rademacher_1}
\end{align}
Let $\tilde{\bm{x}}_i=\mathbb{E}_{\bm{x}_j}[\lambda\bm{x}_i+ (1-\lambda)\bm{x}_j]$ be the expectation of the linear combination of input vectors by mixup, where $\lambda$ is a parameter in mixup and is responsible for adjusting the weights of the two vectors.
Then, we have
\begin{align}
\hat{\mathfrak{R}}^*_B(H) &\leq \frac{\Lambda}{n}\Biggl(\sum^n_{i=1}\|\tilde{\bm{x}}_i\|_2^2\Biggr)^{\frac{1}{2}} = \frac{\Lambda}{n}\Biggl(\sum^n_{i=1}\Biggl\|\mathbb{E}_{x_j}\Bigl[\lambda\bm{x}_i+(1-\lambda)\bm{x}_j\Bigr]\Biggr\|_2^2\Biggr)^{\frac{1}{2}} \nonumber\\
&= \frac{\Lambda}{n}\Biggl(\sum^n_{i=1}\Biggl\|\lambda\bm{x}_i+(1-\lambda)\mathbb{E}_{x_j}\bigl[\bm{x}_j\bigr]\Biggr\|_2^2\Biggr)^{\frac{1}{2}}  \nonumber\\
&\leq \frac{\Lambda}{n}\Biggl(\sum^n_{i=1}\Bigl(\|\lambda\bm{x}_i\|_2^2 + \Bigr\|(1-\lambda)\mathbb{E}_{\bm{x}_j}[\bm{x}_j]\Bigl\|_2^2\Bigr)\Biggr)^{\frac{1}{2}} \nonumber\\
&= \frac{\Lambda}{n}\Biggl(\lambda^2\sum^n_{i=1}\|\bm{x}_i\|_2^2+(1-\lambda)^2\sum^n_{i=1}\Bigl\|\mathbb{E}_{\bm{x}_j}[\bm{x}_j]\Bigr\|_2^2\Biggr)^{\frac{1}{2}}. \label{eq:rademacher_2}
\end{align}
From \eqref{eq:rademacher_1} and \eqref{eq:rademacher_2}, we can have
\begin{align}
\hat{\mathfrak{R}}_B(H)-\hat{\mathfrak{R}}^*_B(H)
&\leq \frac{\Lambda|1-\lambda|}{n}\Biggl(\sum^n_{i=1}\|\bm{x}_i\|_2^2-\sum^n_{i=1}\Bigl\|\mathbb{E}_{\bm{x}_j}[\bm{x}_j]\Bigr\|_2^2\Biggr)^{\frac{1}{2}} \nonumber \\
&= \frac{\Lambda|1-\lambda|}{\sqrt{n}}\Biggl(\frac{1}{n}\sum^n_{i=1}\|\bm{x}_i\|_2^2-\frac{1}{n}\sum^n_{i=1}\|\bar{\bm{x}}\|_2^2\Biggr)^{\frac{1}{2}} \nonumber \\
&= \frac{\Lambda|1-\lambda|}{\sqrt{n}}\Biggl(s^2(\|\bm{x}\|_2)+\|\bar{\bm{x}}\|_2^2-\|\bar{\bm{x}}\|_2^2\Biggr)^{\frac{1}{2}} \nonumber \\
&= \frac{\Lambda|1-\lambda|}{\sqrt{n}}\sqrt{s^2(\|\bm{x}\|_2)} \geq 0.
\end{align}
\end{proof}

The above results are in line with our intuition and illustrate well how mixup depends on the shape of the data distribution.
As can be seen from the \eqref{eq:rademacher_reduction_linear}, the complexity relaxation by mixup decreases as the number of samples $n$ increases (see Figre~\ref{fig:rademacher_diff}).

\section{Complexity Reduction of neural networks with mixup}
\label{sec:complexity_nn}
Let $H_{L,\bm{W}_L}$ be the function class of a neural network:
\begin{equation}
    h(\bm{x}) \in H_{L,\bm{W}_L} = \Big\{h:\|\bm{v}\|_2 = 1, \prod^L_{i=1}\|\bm{W}_i\|_F\leq \bm{W}_L\Big\},
\end{equation}
where $L$ is the number of layers, $\bm{W}_i$ is the weight matrix, $\bm{v} \in \mathbb{R}^{M_L}$ represents the normalized linear classifier operating on the output of the neural networks with input vector $\bm{x}$ and $\|\bm{A}\|_F$ is the Frobenius norm of the matrix $\bm{A}=(a_{ij})$.

\begin{figure}[t]
    \centering
    \includegraphics[scale=0.28, bb=0 50 950 371]{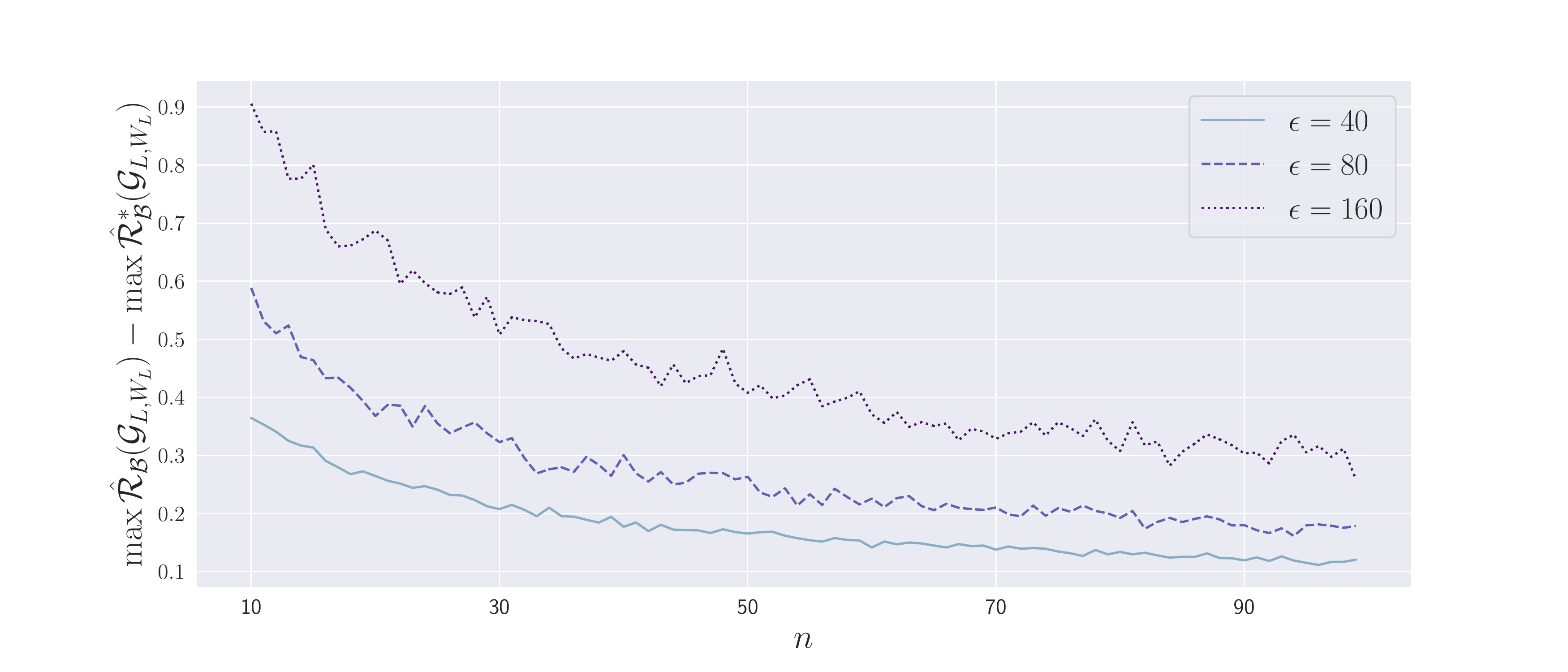}
    \caption{\label{fig:rademacher_diff_nn} The relationship between $\max \hat{\mathfrak{R}}_B(H_{L,\bm{W}_L})-\max\hat{\mathfrak{R}}^*_B(H_{L,\bm{W}_L})$ and the number of samples $n$ and the noise of the outliers $\bm{\epsilon}$.
}
\end{figure}


\begin{theorem}
\label{thm:rademacher_reduction_nn}
\label{THM:RADEMACHER_REDUCTION_NN}
Given a hypothesis set $H_{L,\bm{W}_L}$ and a sample $B = (\bm{x}_1,\dots,\bm{x}_n)$, we assume that $\hat{\mathfrak{R}}_B(H_{L,\bm{W}_L})$ is the empirical Rademacher complexity of the hypothesis class $H_{L,\bm{W}_L}$ and $\hat{\mathfrak{R}}^*_B(H_{L,\bm{W}_L})$ is the empirical Rademacher complexity of $H_{L,\bm{W}_L}$ when mixup is applied. In addition, we assume that each sample $\bm{x}_i$ occurs with the population mean $\bm{\mu_x}$ plus the some noise $\bm{\epsilon}_i$.
In other words, we assume that $\bm{x}_i=\bm{\mu_x}+\bm{\epsilon}_i$.
The difference between the maximum of two Rademacher complexity $\hat{\mathfrak{R}}_B(H_{L,\bm{W}_L})-\hat{\mathfrak{R}}^*_B(H_{L,\bm{W}_L})$ is less than or equal to a constant multiple of the maximum value of noise in a sample of training data when the number of samples $n$ is sufficiently large:
\begin{equation}
    \label{eq:rademacher_reduction_nn}
    \max{\hat{\mathfrak{R}}_B(H_{L,\bm{W}_L})} - \max{\hat{\mathfrak{R}}^*_B(H_{L,\bm{W}_L})} \leq \frac{C^L_\lambda}{\sqrt{n}}\max_i\|\bm{\epsilon}_i\|,
\end{equation}
where $C^L_\lambda$ is a constant that depends on the parameter $\lambda$ of mixup and the number of layers $L$ of neural networks.
\end{theorem}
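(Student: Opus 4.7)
The plan is to mirror the strategy of Theorem~\ref{thm:rademacher_reduction_linear} but starting from a norm-based Rademacher bound tailored to multi-layer networks. First, I would invoke a standard peeling/contraction bound (in the spirit of Bartlett--Mendelson or Golowich--Rakhlin--Shamir) for the class $H_{L,\bm{W}_L}$. Since the output layer is a unit-norm linear classifier $\bm{v}^\top \phi_L(\bm{x})$ and the product of the Frobenius norms of hidden layers is controlled by $\bm{W}_L$, the layer-by-layer peeling (combined with Lipschitzness of the activation, absorbed into a multiplicative factor) delivers an upper bound of the shape
\begin{equation}
\max \hat{\mathfrak{R}}_B(H_{L,\bm{W}_L}) \;\leq\; \frac{K_L \, \bm{W}_L}{n}\Biggl(\sum_{i=1}^n \|\bm{x}_i\|_2^2\Biggr)^{1/2},
\end{equation}
where $K_L$ is a layer-dependent constant (the $2^L$-type factor coming from peeling). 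This plays the exact role that the bound \eqref{eq:rademacher_1} plays in the linear case.

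Next, I would apply the same substitution used in the proof of Theorem~\ref{thm:rademacher_reduction_linear}: when mixup is applied, each input is effectively replaced by $\tilde{\bm{x}}_i = \mathbb{E}_{\bm{x}_j}[\lambda \bm{x}_i + (1-\lambda)\bm{x}_j] = \lambda \bm{x}_i + (1-\lambda)\bar{\bm{x}}$. Using the assumed decomposition $\bm{x}_i = \bm{\mu_x} + \bm{\epsilon}_i$ and noting that for sufficiently large $n$ the empirical mean $\bar{\bm{x}}$ concentrates to $\bm{\mu_x}$ (so $\bar{\bm{\epsilon}} \to 0$), we obtain $\tilde{\bm{x}}_i \approx \bm{\mu_x} + \lambda \bm{\epsilon}_i$. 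Plugging this into the same peeling bound gives the corresponding upper bound for $\max \hat{\mathfrak{R}}^*_B(H_{L,\bm{W}_L})$ in terms of $\sum_i \|\bm{\mu_x} + \lambda \bm{\epsilon}_i\|_2^2$.

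The third step is to subtract the two bounds and simplify. Expanding,
\begin{equation}
\|\bm{x}_i\|_2^2 - \|\tilde{\bm{x}}_i\|_2^2 \;=\; (1-\lambda^2)\|\bm{\epsilon}_i\|_2^2 + 2(1-\lambda)\bm{\mu_x}^\top \bm{\epsilon}_i,
\end{equation}
so after summing over $i$, the cross term vanishes in the $n \to \infty$ regime (again by $\bar{\bm{\epsilon}} \to 0$), leaving $\sum_i (1-\lambda^2)\|\bm{\epsilon}_i\|_2^2 \leq n(1-\lambda^2)\max_i \|\bm{\epsilon}_i\|_2^2$. Taking square roots and folding $K_L$, $\bm{W}_L$, and $\sqrt{1-\lambda^2}$ into the single constant $C^L_\lambda$ yields the $\frac{C^L_\lambda}{\sqrt{n}} \max_i \|\bm{\epsilon}_i\|$ on the right-hand side of \eqref{eq:rademacher_reduction_nn}.

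The main obstacle is the first step: the linear-classifier proof is essentially Cauchy--Schwarz applied to $\|\sum_i \sigma_i \bm{x}_i\|_2$, but for a deep network one must rigorously ``peel'' the $L$ weight matrices and the nonlinearities while keeping the bound in the input-norm form used above. The second source of friction is the cross term $\bm{\mu_x}^\top \bm{\epsilon}_i$ in the expansion; it is precisely what forces the ``sufficiently large $n$'' hypothesis in the theorem statement, and must be controlled either by assuming a centred noise model or by absorbing a $o(1)$ residual into $C^L_\lambda$. Once both are handled, the bound follows by the same algebra as in Theorem~\ref{thm:rademacher_reduction_linear}.
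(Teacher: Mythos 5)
Your overall strategy is the same as the paper's --- plug the original and the mixed-up inputs into a norm-based Rademacher bound for the network class and subtract --- but the two executions differ in a way worth noting. The paper does not peel the layers itself: it directly cites the bound of Neyshabur et al.\ in its \emph{max-norm} form, $\hat{\mathfrak{R}}_B(H_{L,\bm{W}_L}) \leq \frac{1}{\sqrt{n}}2^{L+\frac{1}{2}}\bm{W}_L\max_i\|\bm{x}_i\|$, rather than the sum-of-squares form you propose. This choice does two things for the paper: the quantity $\max_i$ appears immediately (which is where the ``$\max$'' in the theorem statement comes from), and the noise is extracted by a one-line triangle inequality, $\|\bm{\mu_x}+\bm{\epsilon}_i\|_2 \leq \|\bm{\mu_x}\|_2+\|\bm{\epsilon}_i\|_2$ followed by the cancellation $\|\bm{\mu_x}\|_2-\|\bar{\bm{x}}\|_2\approx 0$ for large $n$, with no cross term to control. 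Your route through $\bigl(\sum_i\|\bm{x}_i\|_2^2\bigr)^{1/2}$ is serviceable and lands on the same $\frac{C^L_\lambda}{\sqrt{n}}\max_i\|\bm{\epsilon}_i\|$ shape (via $\sum_i(1-\lambda^2)\|\bm{\epsilon}_i\|_2^2\leq n(1-\lambda^2)\max_i\|\bm{\epsilon}_i\|_2^2$ and $\sqrt{a}-\sqrt{b}\leq\sqrt{a-b}$), but it pays for it with the extra cross term $2(1-\lambda)\bm{\mu_x}^\top\bm{\epsilon}_i$ that you must argue away, and with the burden of actually carrying out the peeling argument that you flag as your ``main obstacle'' --- a burden the paper sidesteps entirely by citation. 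Both your argument and the paper's share the same soft spots: the identification of ``$\max\hat{\mathfrak{R}}$'' with the upper bound itself (so that one is really subtracting two upper bounds), and the unquantified ``sufficiently large $n$'' step replacing $\bar{\bm{x}}$ by $\bm{\mu_x}$; so neither of these is a gap relative to the paper.
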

\begin{proof}
By the upper bound of \cite{neyshabur2015norm}, empirical Rademacher complexity of $h(x)\in{H}_{L,\bm{W}_L}$ is as follows:
\begin{equation}
    \hat{\mathfrak{R}}_B(H_{L,\bm{W}_L}) \leq \frac{1}{\sqrt{n}}2^{L+\frac{1}{2}}\bm{W}_L\max_i\|\bm{x}_i\|.
    \label{eq:rademacher_nn}
\end{equation}
Let $\tilde{\bm{x}}_i=\mathbb{E}_{\bm{x}_j}[\lambda\bm{x}_i+ (1-\lambda)\bm{x}_j]$ be the expectation of the linear combination of input vectors by mixup, where $\lambda$ is a parameter in mixup and is responsible for adjusting the weights of the two vectors.
Then, we have
\begin{align}
\hat{\mathfrak{R}}^*_B(H_{L,\bm{W}_L})&\leq \frac{1}{\sqrt{n}}2^{L+\frac{1}{2}}\bm{W}_L\max_i\|\mathbb{E}_j[\lambda \bm{x}_i + (1-\lambda)\bm{x}_j]\| \nonumber \\
&= \frac{1}{\sqrt{n}}2^{L+\frac{1}{2}}\bm{W}_L\max_i\|\lambda \bm{x}_i + (1-\lambda)\mathbb{E}_j[\bm{x}_j]\| \nonumber\\
&\leq \frac{1}{\sqrt{n}}2^{L+\frac{1}{2}}\bm{W}_L\max_i\Big\{\lambda\|\bm{x}_i\| + (1-\lambda)\|\mathbb{E}_j[\bm{x}_j]\|\Big\}. \nonumber\\ \label{eq:rademacher_nn_mixup}
\end{align}
Now we consider to bound the difference between the maximum values of each quantity, 
\begin{align*}
    \max{\hat{\mathfrak{R}}_B(H_{L,\bm{W}_L})} &= \frac{1}{\sqrt{n}}2^{L+\frac{1}{2}}\bm{W}_L\max_i\|\bm{x}_i\|, \\
    \max{\hat{\mathfrak{R}}^*_B(H_{L,\bm{W}_L})} &= \frac{1}{\sqrt{n}}2^{L+\frac{1}{2}}\bm{W}_L\max_i\Big\{\lambda\|\bm{x}_i\| + (1-\lambda)\|\mathbb{E}_j[\bm{x}_j]\|\Big\},
\end{align*}
and then, from \eqref{eq:rademacher_nn} and \eqref{eq:rademacher_nn_mixup}, and let $\mathcal{J}(H_{L, \bm{W}_L}, B) = \max{\hat{\mathfrak{R}}_B(H_{L,\bm{W}_L})} - \max{\hat{\mathfrak{R}}^*_B(H_{L,\bm{W}_L})} $ we can have
\begin{align}
\mathcal{J}(H_{L, \bm{W}_L}, B) &\leq  \frac{1-\lambda}{\sqrt{n}}2^{L+\frac{1}{2}}\bm{W}_L\max_i\Big|\|\bm{x}_i\|_2 - \|\bar{\bm{x}}\|_2\Big| \nonumber\\
&= \frac{1-\lambda}{\sqrt{n}}2^{L+\frac{1}{2}}\bm{W}_L\max_i\Big|\|\bm{\mu_x}+\bm{\epsilon}_i\|_2 - \|\bar{\bm{x}}\|_2\Big| \nonumber\\
&\leq \frac{1-\lambda}{\sqrt{n}}2^{L+\frac{1}{2}}\bm{W}_L\max_i\Big|\|\bm{\mu_x}\|_2+\|\bm{\epsilon}_i\|_2 - \|\bar{\bm{x}}\|_2\Big| \nonumber\\
&= \frac{1-\lambda}{\sqrt{n}}2^{L+\frac{1}{2}}\bm{W}_L\max_i\|\bm{\epsilon}_i\|_2  \label{eq:40} \geq 0\ \ \ (\because 1-\lambda\geq 0, \|\bm{\epsilon}_i\|_2\geq 0), \nonumber
\end{align}
\end{proof}



According to the above theorem, mixup allows the neural networks robust learning for outliers with accidentally large noise $\bm{\epsilon}$ in the training sample $B$ (see Figre~\ref{fig:rademacher_diff_nn}).

\section{The Optimal Parameters of Mixup}
Here, we let the parameter $\lambda\in(0,1)$.
From \eqref{eq:rademacher_reduction_linear} and \eqref{eq:rademacher_reduction_nn}, we can see that a large $1-\lambda$ has a good regularization effect.
By swapping $i$ and $j$, we can see that $\lambda$ should be close to $0$ or $1$.

In the original mixup paper~\cite{zhang2018mixup}, the parameter $\lambda$ is sampled from the Beta distribution $Beta(\alpha,\alpha)$, where $\alpha$ is another parameter.
We can see that when $\alpha<1$, $\lambda$ is sampled such that one of the input vectors has a high weight (in other words, $\lambda$ is close to $0$ or $1$).
We treated $\lambda$ as a constant in the above discussion, but if we treat it as a random variable $\lambda\sim Beta(\alpha, \alpha)$, we can obtain 
$\mathbb{E}[\lambda] = \frac{\alpha}{\alpha + \alpha} = \frac{1}{2}$ and 
$Var(\lambda) = \frac{\alpha^2}{(\alpha+\alpha)^2(\alpha+\alpha+1)} = \frac{\alpha^2}{4\alpha^2(2\alpha+1)}
= \frac{1}{4(2\alpha+1)},
$
where $\alpha>0$.
Since the $\mathbb{E}[\lambda]$ is a constant, we can see that when the weight parameter $\lambda$ is close to $0$ or $1$, $\alpha$ is expected to be close to $0$.

\begin{figure*}[t]
\centering
\includegraphics[scale=0.3, bb=170 0 963 400]{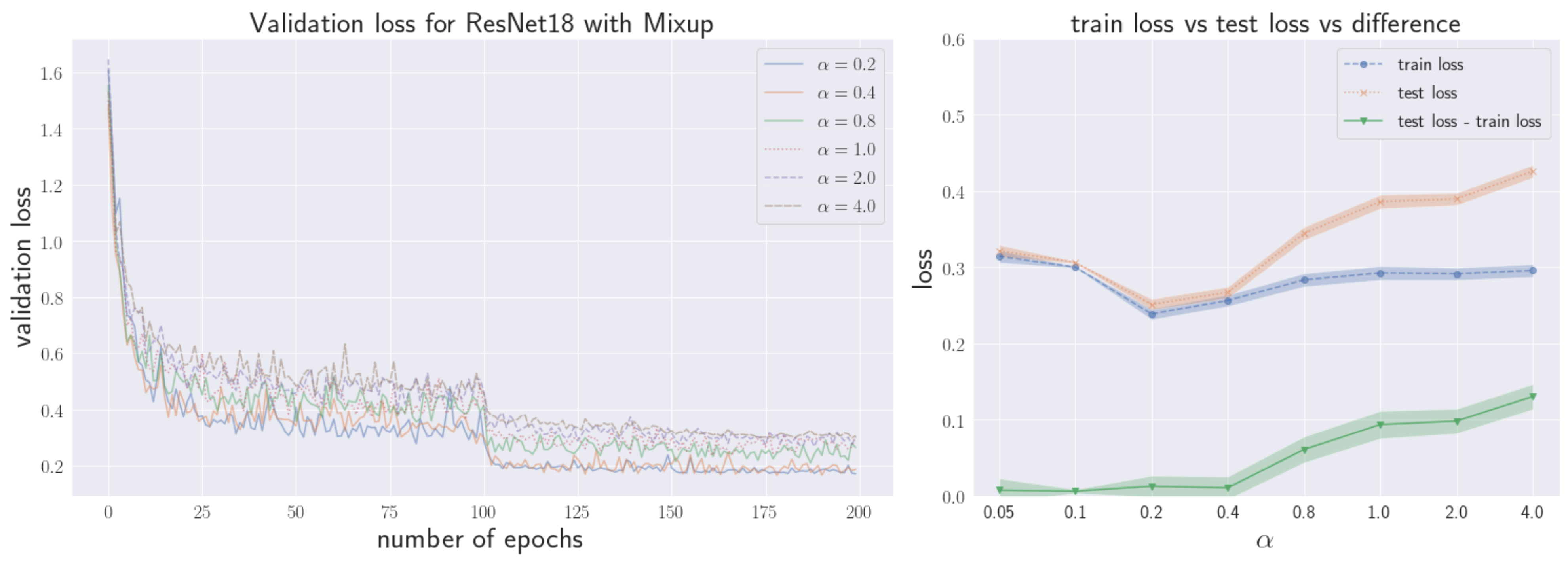}
\caption{\label{fig:cifar_exp} Experimental results for CIFAR-10 dataset.We use ResNet-18 as a classifier and apply mixup with each parameter $\alpha$ for $\lambda\sim Beta(\alpha, \alpha)$. Left: Learning curve of ResNet-18 with mixup. 
}
\end{figure*}

Figure~\ref{fig:cifar_exp} shows the experimental results for CIFAR-10~\cite{krizhevsky2009learning}.
We use ResNet-18~\cite{he2016deep} as a classifier with $lr=0.1$, $epochs=200$ and apply mixup with each parameter $\alpha$ for $\lambda\sim Beta(\alpha, \alpha)$.
In addition, we performed 10 trials with different random seeds and reported the mean values of the trials.
This shows that the generalization performance is higher when the parameter $\alpha$ is a small value.
The right side of Figure~\ref{fig:cifar_exp} shows a plot of the training loss and test loss of the classifier and their differences for each $\alpha$.
We can see that when the value of parameter $\alpha$ is small, the difference between train loss and test loss is small.
Table~\ref{tab:generalization_error} shows the effect of the parameter $\alpha$ on the generalization gap between train and test loss for each dataset.

\begin{table*}[]
\centering
\caption{Effect of the parameter $\alpha$ on the generalization gap between train and test loss for each dataset.}
\label{tab:generalization_error}
\begin{tabular}{l|lllllll}
\hline
dataset       & $\alpha=0.1$    & $\alpha=0.2$ & $\alpha=0.4$ & $\alpha=0.8$ & $\alpha=1.0$ & $\alpha=2.0$ & $\alpha=4.0$ \\ \hline\hline
CIFAR10~\cite{krizhevsky2009learning}       & \textbf{0.006} & 0.012      & 0.010      & 0.061      & 0.093      & 0.098      & 0.130      \\
CIFAR100~\cite{krizhevsky2009learning}      & \textbf{0.182} & 0.259      & 0.277      & 0.292      & 0.348      & 0.596      & 0.695      \\
STL10~\cite{coates2011analysis}          & \textbf{0.013} & 0.0215      & 0.029      & 0.090      & 0.121      & 0.120      & 0.169      \\
SVHN~\cite{netzer2011reading}          & \textbf{0.049} & 0.050      & 0.057      & 0.062      & 0.087      & 0.133      & 0.182     \\ \hline
\end{tabular}
\end{table*}

\section{Geometric Perspective of Mixup Training: Parameter Space Smoothing}
\begin{definition}{(Bregman divergence)}
For some convex function $\varphi(\cdot)$ and $d$-dimensional parameter vector $\bm{\xi}\in\mathbb{R}^d$, the Bregman divergence from $\bm{\xi}$ to $\bm{\xi}'$ is defined as follows:
\begin{equation}
    D_\varphi[\xi:\xi'] = \varphi(\xi) - \varphi(\xi') - \nabla\varphi(\xi')\cdot(\xi-\xi').
\end{equation}
\end{definition}
\begin{theorem}
\label{thm:bregman_gradient_reduction}
\label{THM:BREGMAN_GRADIENT_REDUCTION}
Let $p(\bm{x};\bm{\theta})$ be the exponential distribution family that depends on the unknown parameter vector $\bm{\theta}$.
When mixup is applied, the second-order derivative $\nabla\nabla\psi_\lambda(\bm{\theta})$ of $\psi_\lambda(\bm{\theta})$ that characterizes the Bregman divergence between the parameter $\theta$ and $\theta+d\theta$, which is a slight change of the parameter, satisfies the following:
\begin{equation}
    \nabla\nabla\psi_\lambda(\bm{\theta}) = \lambda^2(\nabla\nabla\psi(\bm{\theta})), \label{eq:bregman_grad}
\end{equation}
where $\psi(\bm{\theta})$ is a convex function of the original data distribution and $\lambda\in(0,1)$ is a parameter of the mixup.
\end{theorem}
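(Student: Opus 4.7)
The plan is to exploit the classical fact that for an exponential family $p(\bm{x};\bm{\theta}) \propto \exp(\bm{\theta}^T\bm{x}-\psi(\bm{\theta}))$, the convex function $\psi$ that generates the Bregman divergence on the parameter side coincides with the cumulant generating function of the sufficient statistic, so its Hessian is exactly the covariance of $\bm{x}$ under $p(\cdot;\bm{\theta})$. The Bregman divergence at infinitesimal scale then reads $D_\psi[\bm{\theta}:\bm{\theta}+d\bm{\theta}] = \tfrac{1}{2}\,d\bm{\theta}^T(\nabla\nabla\psi(\bm{\theta}))d\bm{\theta} + o(\|d\bm{\theta}\|^2)$, so proving (\ref{eq:bregman_grad}) is equivalent to showing that the covariance of the mixup data is $\lambda^2$ times the covariance of the original data.

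The first step is to write down $\psi_\lambda(\bm{\theta})$ explicitly. Following the same convention used in Sections~\ref{sec:complexity_linear} and~\ref{sec:complexity_nn}, where $\bm{x}_j$ is replaced by its expectation under $p(\cdot;\bm{\theta})$, the mixup variable becomes $\tilde{\bm{x}} = \lambda\bm{x} + (1-\lambda)\mathbb{E}[\bm{x}] = \lambda\bm{x}+(1-\lambda)\bar{\bm{x}}$ with $\bm{x}\sim p(\cdot;\bm{\theta})$. I would then compute the cumulant generating function of $\tilde{\bm{x}}$: since $(1-\lambda)\bar{\bm{x}}$ is a constant shift and $\lambda\bm{x}$ rescales the argument of the cumulant generator, one obtains
\begin{equation}
K_{\tilde{\bm{x}}}(\bm{t}) = (1-\lambda)\,\bm{t}^T\bar{\bm{x}} + \psi(\bm{\theta}+\lambda\bm{t}) - \psi(\bm{\theta}),
\end{equation}
so that $\psi_\lambda(\bm{\theta})$ inherits its second-order behavior in $\bm{\theta}$ from the $\psi(\bm{\theta}+\lambda\bm{t})$ term.

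The second step is to differentiate. The linear-in-$\bm{t}$ shift contributes nothing to the Hessian, and by the chain rule $\nabla_{\bm{\theta}}\nabla_{\bm{\theta}}\,\psi(\bm{\theta}+\lambda\bm{t})\big|_{\bm{t}=0}$ or equivalently the covariance of $\lambda\bm{x}$ yields the factor $\lambda^2$. Concretely, $\mathrm{Cov}[\tilde{\bm{x}}] = \lambda^2\,\mathrm{Cov}[\bm{x}]$ because the second term is constant, and identifying the left-hand side with $\nabla\nabla\psi_\lambda(\bm{\theta})$ and the right-hand side with $\lambda^2\nabla\nabla\psi(\bm{\theta})$ closes the argument.

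The main obstacle I anticipate is not algebraic but definitional: one must make sure the statement ``the convex function characterizing the Bregman divergence of the mixup distribution'' is interpreted as the cumulant generator of $\tilde{\bm{x}}$ at the original parameter $\bm{\theta}$, rather than as the log-partition of a genuinely new exponential family (which $\tilde{\bm{x}}$ need not belong to, since a convex combination of exponential-family variates is not in general in the same family). Once that interpretation is fixed and the convention $\bar{\bm{x}}=\mathbb{E}_{\bm{x}_j}[\bm{x}_j]$ from the earlier theorems is reused, the rest is a one-line differentiation.
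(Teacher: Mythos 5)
Your proposal is correct and follows essentially the same route as the paper: identify $\nabla\nabla\psi$ with the covariance of the sufficient statistic via the cumulant generating function, replace $\bm{x}_j$ by its expectation so that $\tilde{\bm{x}}=\lambda\bm{x}+(1-\lambda)\mathbb{E}[\bm{x}]$, and conclude $\mathrm{Cov}[\tilde{\bm{x}}]=\lambda^2\mathrm{Cov}[\bm{x}]$. Your explicit write-out of $K_{\tilde{\bm{x}}}$ and your remark that $\psi_\lambda$ must be read as the cumulant generator of $\tilde{\bm{x}}$ (not the log-partition of a genuine new exponential family) is a more careful treatment of a point the paper glosses over, but it does not change the substance of the argument.
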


\begin{figure*}[t]
    \centering
    \includegraphics[scale=0.3]{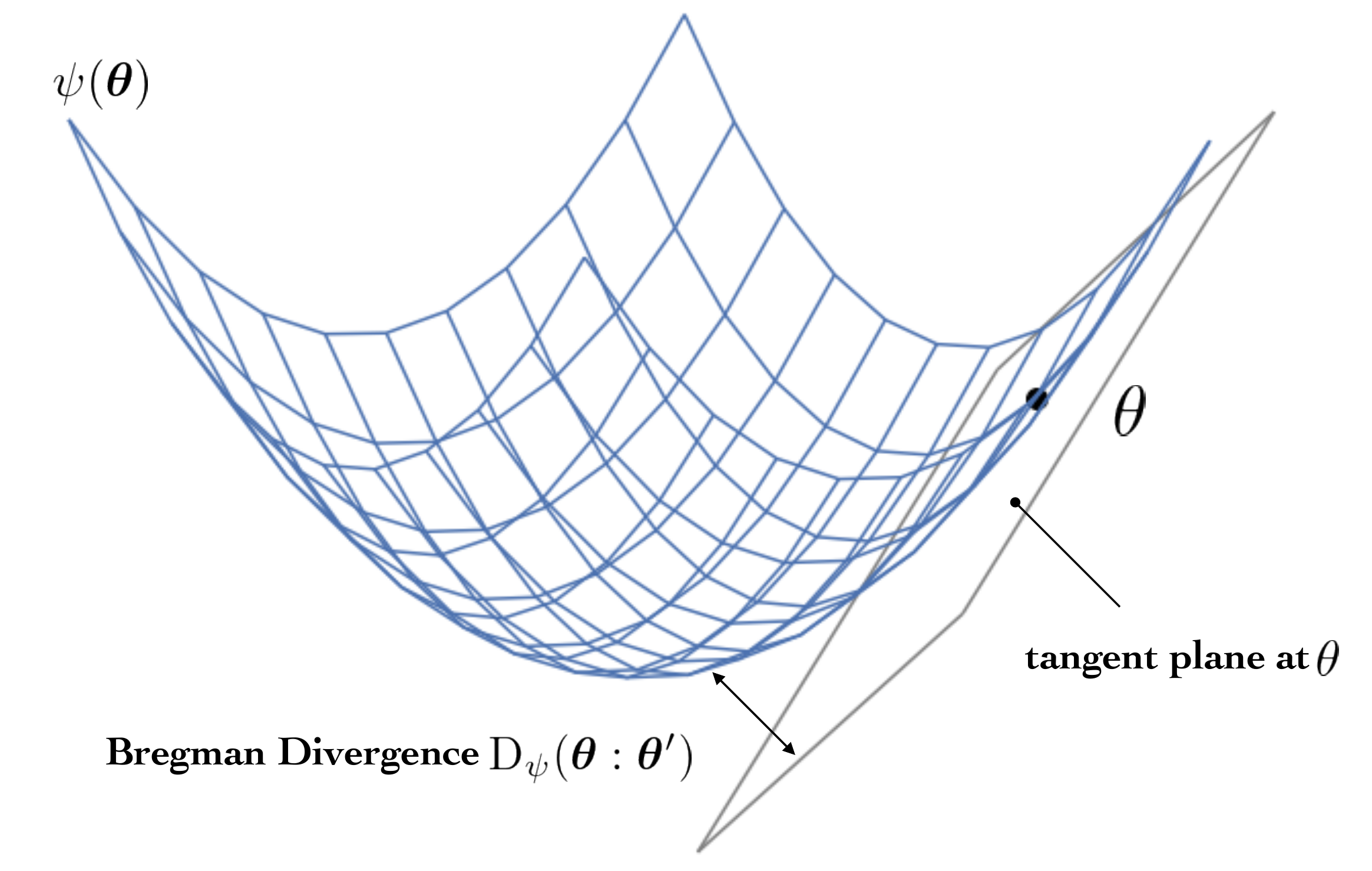}
    \caption{\label{fig:bregman_divergence}Bregman divergence from $\bm{\theta}'$ to $\bm{\theta}$. This divergence derived from the convex function $\psi(\bm{\theta})$ and its supporting hyperplane with normal vector $\nabla\psi(\bm{\theta}_0)$.}
\end{figure*}

\begin{proof}
An exponential family of probability distributions is written as
\begin{equation}
    p(\bm{x};\bm{\theta}) = \exp{\Biggl\{\sum\theta_i x_i + k(\bm{x}) - \psi(\bm{\theta})\Biggr\}},\label{eq:exponential_family_p}
\end{equation}
where $p(\bm{x};\bm{\theta})$ is the probability density function of random variable vector $\bm{x}$ specified by parameter vector $\bm{\theta}$ and $k(\bm{x})$ is a function of $\bm{x}$.
Since $\int p(\bm{x};\bm{\theta})=1$, the normalization term $\psi(\bm{\theta})$ can be written as:
\begin{equation}
    \psi(\bm{\theta}) = \log\int\exp{\Biggl\{\sum_i\theta_i\ x_i + k(\bm{x})\Biggr\}}d\bm{x} \label{eq:exponential_convex_p}
\end{equation}
which is known as the cumulant generating function in statistics.
By differentiating \eqref{eq:exponential_convex_p}, we can confirm that the Hessian becomes a positive definite matrix, which means that $\psi(\bm{\theta})$ is a convex function.
Here, the Bregman divergence from $\bm{\xi}$ to $\bm{\xi}'$ is defined by using the convex function $\varphi(\bm{\xi})$:
\begin{equation}
    D_\varphi[\xi:\xi'] = \varphi(\xi) - \varphi(\xi') - \nabla\varphi(\xi')\cdot(\xi-\xi')
\end{equation}
Let $\psi(\cdot)=\varphi(\cdot)$ and $\bm{\theta}=\bm{\xi}$, then we can naturally define the Bregman divergence for $\psi(\cdot)$ and $\bm{\theta}$.
Differentiating \eqref{eq:exponential_family_p}, we can obtain
\begin{align}
0 &= \frac{\partial}{\partial\theta_i}\int\exp\Biggl\{\sum_i\theta_i x_i + k(\bm{x}) - \psi(\theta) \Biggr\}d\bm{x} \nonumber \\
&= \int\Biggl\{x_i - \frac{\partial}{\partial\theta_i}\psi(\bm{\theta})\Biggr\}p(\bm{x};\bm{\theta})d\bm{x} \label{eq:bregman_3} = \int x_i p(\bm{x};\bm{\theta})d\bm{x} - \frac{\partial}{\partial\theta_i}\psi(\bm{\theta}) \nonumber\\
\therefore \frac{\partial}{\partial\theta_i}\psi(\bm{\theta}) &= \int x_i p(\bm{x};\bm{\theta})d\bm{x} = \mathbb{E}[x_i] \nonumber \\
\nabla\psi(\bm{x}) &= \mathbb{E}[\bm{x}].
\end{align}
Differentiating it again,
\begin{align}
0 &= \int \frac{\partial}{\partial\theta_j}\Big\{x_i - \frac{\partial}{\partial\theta_i}\psi(\bm{\theta})\Big\} p(\bm{x};\bm{\theta}) + \Big\{x_i - \frac{\partial}{\partial\theta_i}\psi(\bm{\theta})\Big\}\frac{\partial}{\partial\theta_j}p(\bm{x};\bm{\theta})d\bm{x} \nonumber \\
&= \int -\frac{\partial^2}{\partial\theta_i\partial\theta_j}\psi(\bm{\theta})d\bm{x}  + \int\Big\{x_i - \frac{\partial}{\partial\theta_i}\psi(\bm{\theta})\Big\}\Big\{x_j - \frac{\partial}{\partial\theta_j}\psi(\bm{\theta})\Big\} p(\bm{x};\bm{\theta})d\bm{x} \nonumber \\
&= -\frac{\partial^2}{\partial\theta_i\partial\theta_j}\psi(\bm{\theta}) + \int(x_i-\mathbb{E}[x_i])(x_j-\mathbb{E}[x_j])p(\bm{x};\bm{\theta})d\bm{x} \nonumber\\
&= -\frac{\partial^2}{\partial\theta_i\partial\theta_j}\psi(\bm{\theta}) + \mathbb{E}[(x_i-\mathbb{E}[x_i])(x_j-\mathbb{E}[x_j])] \nonumber \\
\therefore \nabla\nabla\psi(\bm{\theta}) &= Var(\bm{x}).
\end{align}
Here, if we adopt the linear combination $\tilde{\bm{x}} = \lambda\bm{x} + (1-\lambda)\bm{x}_j$ to find the parameter $\bm{\theta}$, we can obtain
\begin{align}
\nabla\psi_\lambda(\bm{\theta}) &= \mathbb{E}[\tilde{\bm{x}}] = \mathbb{E}[\lambda\bm{x}+(1-\lambda)\mathbb{E}[\bm{x}]] = \mathbb{E}[\bm{x}], \\
\nabla\nabla\psi_\lambda(\bm{\theta}) &= Var(\lambda\bm{x}+(1-\lambda)\mathbb{E}[\bm{x}]) \nonumber \\
&= \lambda^2 Var(\bm{x}) + (1-\lambda)^2 Var(\mathbb{E}[\bm{x}]) = \lambda^2 Var(\bm{x}) = \lambda^2\psi(\bm{\theta})
\end{align}
where $\psi_\lambda(\cdot)$ is defined by
\begin{equation}
    p(\tilde{\bm{x}};\bm{\theta}) = \exp{\Biggl\{\sum\theta_i \tilde{x}_i + k(\tilde{\bm{x}}) - \psi_\lambda(\bm{\theta})\Biggr\}}.
\end{equation}
From Bayes theorem, we would be computing the probability
of a parameter given the likelihood of some data:
    $p(\tilde{\bm{x}};\bm{\theta}) = \frac{p(\tilde{\bm{x}};\bm{\theta})p(\bm{\theta})}{\sum_{\bm{\theta}}'p(\tilde{\bm{x}};\bm{\theta}')p(\bm{\theta}')}$,
and applying mixup means $p(\bm{x};\bm{\theta})\to p(\tilde{\bm{x}};\bm{\theta})$.
And then, we can obtain \eqref{eq:bregman_grad}.
\end{proof}
Bregman divergence is a generalization of KL-divergence, which is frequently used in probability distribution spaces.
The above theorem means that the magnitude of the gradient of the convex function characterizing the Bregman divergence can be smoothed by using the mixup.

\section{Conclusion and Discussion}
In this paper, we provided a theoretical analysis of mixup regularization for linear classifiers and neural networks with ReLU activation functions.
Our results show that a theoretical clarification of the effect of the mixup training.




%
%
%
\bibliographystyle{splncs04}
\bibliography{references}

\end{document}